\newtheorem{assumption}{Assumption}
\newtheorem{definition}{Definition}
\newtheorem{theorem}{Theorem}
\newtheorem{remark}{Remark}
\newtheorem{proposition}{Proposition}
\title{\LARGE \bf
Safety Verification of Neural Network Control Systems Using Guaranteed Neural Network Model Reduction
}
\author{Weiming Xiang and Zhongzhu Shao
\thanks{This research was supported by the National Science Foundation, under NSF CAREER Award 2143351 and NSF CNS Award 2223035.}
\thanks{Weiming Xiang is with School of Computer and Cyber Sciences, Augusta University, Augusta GA 30912, USA.
        {\tt\small wxiang@augusta.edu}}%
\thanks{Zhongzhu Shao is with School of Electrical Engineering, Southwest Jiaotong University, China.
        {\tt\small }}%
}
\begin{document}

\maketitle
\thispagestyle{empty}
\pagestyle{empty}

\begin{abstract}
This paper aims to enhance the computational efficiency of safety verification of neural network control systems by developing a guaranteed neural network model reduction method. First, a concept of model reduction precision is proposed to describe the guaranteed distance between the outputs of a neural network and its reduced-size version. A reachability-based algorithm is proposed to accurately compute the model reduction precision. Then, by substituting a reduced-size neural network controller into the closed-loop system, an algorithm to compute the reachable set of the original system is developed, which is able to support much more computationally efficient safety verification processes. Finally, the developed methods are applied to a case study of the Adaptive Cruise Control system with a neural network controller, which is shown to significantly reduce the computational time of safety verification and thus validate the effectiveness of the method.
\end{abstract}

\section{Introduction}

Neural networks are currently widely used in various fields, such as image processing \cite{LITJENS201760}, pattern recognition \cite{SCHMIDHUBER201585}, adaptive control \cite{HUNT19921083}, unmanned vehicles \cite{bojarski2016end} and aircraft collision avoidance systems \cite{Julian2016}, etc., demonstrating their powerful capabilities in solving complex and challenging problems that traditional approaches fail to address. 
As neural networks are further investigated, the size and complexity of their models continue to increase in order to improve their performance and accuracy to cope with complex and difficult tasks and changing environments. However, more complex large-scale neural network models also imply larger computational resources, such as larger memory, higher computational power and more energy consumption in applications \cite{Simon2020}. As a result, many neural network model reduction methods have been developed, such as parameter pruning and sharing, low-rank factorization, transfer/compact convolution filters, and knowledge distillation \cite{Yabo2019}. More results on neural network model reduction can be found in a recent survey \cite{deng2020model}.

On the other hand, due to the black-box nature of neural networks, neural networks are vulnerable in the face of resistance to interference/attacks. It has been observed that neural networks trained on large amounts of data are sometimes sensitive to updates and react to even small changes in parameters in unexpected and incorrect ways \cite{szegedy2013intriguing}. When neural networks are applied as controllers onto dynamical systems, they will inevitably suffer from safety problems due to the inevitable disturbances and uncertainties in the control process, further affecting the stability and safety of the whole closed-loop system. Therefore, when integrating neural networks into safety-critical control systems, the safety of the neural network needs to be guaranteed at all times, i.e., the safety verification of the neural network needs to be implemented. However, due to the sensitivity of neural networks to perturbations and the complex structure of neural networks, the verification of neural networks is extremely difficult. It has been demonstrated that the verification of simple properties of a small-scale neural network is an uncertainty polynomial (NP) complete problem \cite{katz2017reluplex}.
A few results have been reported in the literature for the formal verification of systems consisting of neural networks, readers are referred to the recent survey \cite{tran2020verification}. Specifically, reachability analysis is one of the promising safety verification tools such as in \cite{xiang2017reachable,xiang2020reachable,xiang2018output,tran2019star,tran2020nnv}, a simulation-based approach is proposed that transforms the difficulty of over-approximating the neural network's output set into a problem of estimating the neural network's maximal sensitivity, which is formulated as a series of convex optimization problems \cite{xiang2020reachable,xiang2018output}. Polytope-operation-based approaches were developed in \cite{xiang2017reachable,tran2019star,tran2020nnv} for dealing with a class of neural networks with activation functions of Rectified Linear Units (ReLU). However, the scalability issue is the major barrier preventing applying these methods to large-scale neural networks as well as neural network control systems active in a long period of time which means a large amount of reachable set computation is required during the time of interest.

In this paper, we propose a guaranteed model reduction method for neural network controllers based on the neural network reachability analysis and apply it to enhance the scalability of the reachability-based safety verification of closed-loop systems. Firstly, a concept of model reduction precision is proposed to accurately measure the distance between the outputs of an original neural network and its reduced-size version, and an approach to compute the model reduction precision is proposed, which ensures that the difference between the outputs obtained from two neural networks for a given input interval, chosen with any identical input, is within the model reduction precision. This algorithm is then applied to the model reduction of the neural network control system, enabling computationally efficient verification processes based on the reduced-size neural network controller. Finally, the correctness and feasibility of our approach are verified by applying it to the safety verification through the Adaptive Cruise Control (ACC) case study.

The remainder of the paper is organized as follows: Preliminaries are given in Section II. The guaranteed model reduction of neural networks is presented in Section III. The reachable set computation and safety verification algorithm for the neural network control system are presented in Section IV. The evaluation on the adaptive cruise control system is given in Section V. The conclusion is given in Section VI.

\section{Preliminaries}

In this paper, we consider a class of continuous-time nonlinear systems in the form of
\begin{align} \label{DynSys}
    \begin{cases}
    \mathbf{\dot x}(t) = f( \mathbf{x} (t), \mathbf{u}(t)) \\
    \mathbf{y}(t) = h( \mathbf{x}(t) )
    \end{cases} 
\end{align}
where $\mathbf{x}(t) \in \mathbb{R}^{n_x}$ is the state vector, $\mathbf{u}(t) \in \mathbb{R}^{n_u}$ is the control input and the $ \mathbf{y}(t) \in \mathbb{R}^{n_y}$ is the output vector. In general, the control input is in the form of
\begin{align} \label{ContIn}
    \mathbf{u}(t) = \gamma (\mathbf{y}(t), \mathbf{r}(t), t)
\end{align}
where $\mathbf{r}(t) \in \mathbb{R}^{n_r}$ is the reference input for the controller. 

To avoid the difficulties in the controller
design when system models are complex or even unavailable, one effective method is to use input-output data to train neural networks capable of generating appropriate control input signals to achieve control objectives. The neural network controller is in the form of
\begin{align} \label{ContIn_nn}
    \mathbf{u}(t) = \Phi (\mathbf{y}(t), \mathbf{r}(t))
\end{align}
where $\Phi$ denotes the neural network mapping output and reference signals to control input. 

In actual applications, the neural network receives input and generates output in a fraction of the computation time, so the control input generated by the neural network is generally discrete, generated only at each sampling time point $t_k, k \in \mathbb{N}$, and then remains a constant value between two successive sampling time instants. Therefore, the continuous-time nonlinear dynamical system with a neural network controller with sampling actions can be expressed in the following form of
\begin{align} \label{Dynsys_nn}
    {\begin{cases}
    \mathbf{\dot x}(t) = f( \mathbf{x} (t), \Phi ( \bm{\tau} (t_k))) \\
    \mathbf{y}(t) = h( \mathbf{x}(t) )
    \end{cases}} , \quad t \in [t_k, t_{k+1})
\end{align}
where $\bm{\tau}(t_k) = [\mathbf{y}^{\top}(t_k), \mathbf{r}^{\top}(t_k)]^{\top}$.

In this work, we consider feedforward neural networks for controllers in the form of  $\Phi: \mathbb{R}^{n_0} \to \mathbb{R}^{n_{L}}$ defined by the following recursive equations in the form of 
\begin{align}\label{eq:nn}
\begin{cases}
    \bm{\eta}_{\ell} = \phi_{\ell}(\mathbf{W}_{\ell} \bm{\eta}_{\ell-1}+\mathbf{b}_\ell),~\ell = 1,\ldots,L
    \\
    \bm{\eta}_{L} = \Phi(\bm{\eta}_0)
    \end{cases}
\end{align}
where $\bm{\eta}_\ell$ denotes the output of the $\ell$-th layer of the neural network, and in particular $\bm{\eta}_0\in\mathbb{R}^{n_0}$ is the input to the neural network and $\bm{\eta}_L\in \mathbb{R}^{n_L}$ is the output produced by the neural network, respectively. $\mathbf{W}_\ell \in \mathbb{R}^{n_{\ell}\times n_{\ell-1}}$ and $\mathbf{b}_{\ell} \in \mathbb{R}^{n_{\ell}}$ are weight matrices and bias vectors for the $\ell$-th layer.  $\phi_\ell = [\psi_{\ell},\cdots,\psi_{\ell}]$ is the concatenation of activation functions of the $\ell$-th layer in which $\psi_{\ell}:\mathbb{R} \to \mathbb{R}$ is the activation function. 

In this paper, we aim at reducing the computational cost of safety verification of neural network control systems in the framework of reachable set computation. 

\begin{definition} 
	Given a neural network in the form of (\ref{eq:nn}) and an input
	set $\mathcal{U}$, the following set
	\begin{align}
	\mathcal{Y} = \left\{\bm{\eta}_{L} \in \mathbb{R}^{n_L} \mid \bm{\eta}_{L} = \Phi (\bm{\eta}_{0}),~ \bm{\eta}_{0} \in \mathcal{U}\right\}
	\end{align}
	is called the output set of neural network (\ref{eq:nn}). 
\end{definition}

\begin{definition}
	A set $\mathcal{Y}_e$ is called an output reachable set over-approximation of neural network (\ref{eq:nn}), if $\mathcal{Y}\subseteq
	\mathcal{Y}_e$ holds, where $\mathcal{Y}$ is the output
	reachable set of neural network (\ref{eq:nn}).
\end{definition}

\begin{definition}
	Given a neural network control system in the form of (\ref{DynSys}) and (\ref{ContIn_nn})  with initial set $\mathcal{X}_0$ and input set $\mathcal{V}$, the reachable set at  time $t$ is
	\begin{equation}
	\mathcal{R}(t) =\left\{\mathbf{x}(t;\mathbf{x}_0,\mathbf{r}(\cdot))\in\mathbb{R}^{n_x} \mid \mathbf{x}_0 \in \mathcal{X}_0,~ \mathbf{r}(t) \in \mathcal{V}\right\}
	\end{equation}
	and the union of $\mathcal{R}(t)$ over $[t_0,t_f]$ defined by
	\begin{equation}
	\mathcal{R}([t_0,t_f]) = \bigcup\nolimits_{t \in [t_0,t_f]}\mathcal{R}(t) 
	\end{equation}
	is the reachable set over time interval $[t_0,t_f]$. 
\end{definition}

\begin{definition}\label{def2}
	A set $\mathcal{R}_e(t)$ is an over-approximation of $\mathcal{R}(t)$
	at time $t$ if $\mathcal{R}(t) \subseteq \mathcal{R}_e(t)$ holds.  Moreover, $\mathcal{R}_e([t_0,t_f]) = \bigcup\nolimits_{t \in[t_0,t_f]} \mathcal{R}_e(t)$ is an
	over-approximation of $\mathcal{R}([t_0,t_f])$ over time interval $[t_0, t_f]$.
\end{definition}

\begin{definition}
	Safety specification $\mathcal{S}$ formalizes the safety requirements for state $\mathbf{x}(t)$ of neural network control system (\ref{DynSys}), and is a predicate over state $\mathbf{x}(t)$ of neural network control system (\ref{DynSys}). The neural network control system (\ref{DynSys}) is safe over time interval $[t_0, t_f ]$ if the	following condition is satisfied:
	\begin{equation}\label{verification}
	\mathcal{R}_e([t_0,t_f])  \cap \neg \mathcal{S} = \emptyset
	\end{equation}
	where $\neg$ is the symbol for logical negation.
\end{definition}

As indicated in \cite{xiang2017reachable,xiang2020reachable,xiang2018output,tran2019star,tran2020nnv}, the computation cost for reachable set computation heavily relies on the size of neural networks, i.e., numbers of layers and neurons. In this paper, we aim to reduce the size of the neural network controller and rigorously compute the model reduction error, i.e., guaranteed neural network model reduction, so that the reachable set computation can be efficiently performed on a significantly reduced-size neural network and then mapped back to the original neural network to reach safety verification conclusions.

\section{Guaranteed Neural Network Model Reduction}

Given a large-scale neural network $\Phi$, there exist a large number of neural network model reduction methods as in survey paper \cite{deng2020model} to obtain its reduced-size version $\hat \Phi$ as below:
\begin{align}\label{eq:reduce_nn}
\begin{cases}
    \hat{\bm{\eta}}_{\ell} = \hat{\phi}_{\ell}(\hat{\mathbf{W}}_{\ell} \hat{\bm{\eta}}_{\ell-1}+\hat{\mathbf{b}}_\ell),~\ell = 1,\ldots,\hat L
    \\
    \hat{\bm{\eta}}_{L} = \hat{\Phi}(\hat{\bm{\eta}}_0)
    \end{cases}
\end{align}

To enable guaranteed neural network model reduction, the key is how to rigorously compute the output difference between the original neural network $\Phi$ and its reduced-size version $\hat \Phi$. Without loss of generality, the following assumption is given for neural network $\Phi$ and its reduced-size version $\hat \Phi$. 

\begin{assumption}\label{assump1}The following assumptions hold for neural network $\Phi$ and its reduced-size version $\hat \Phi$:
\begin{enumerate}
    \item The number of inputs of two neural networks are the same, i.e., $n_0 = \hat n_0$;
    \item The number of outputs of two neural networks are the same, i.e., $n_{L} =  n_{\hat L}$;
    \item The number of hidden layers of neural network  $\Phi$ is greater than or equal to the number of hidden layers of neural network  $\hat \Phi$, i.e., $L \ge \hat L$.
\end{enumerate}
\end{assumption}

To characterize the output difference between $\Phi$ and $\hat \Phi$, we define the following metric for model reduction precision.

\begin{definition}\label{def2} Consider neural network $\Phi$ and its reduced-size version $\hat \Phi$ with one same input set $\mathcal{U}$ and their corresponding output sets $\mathcal{Y}$ and $\hat{\mathcal{Y}}$, we define the distance between the outputs of $\Phi$ and $\hat \Phi$ with respect to the input set $\mathcal{U}$ by
\begin{align} \label{def2_1}
   \rho(\Phi,\hat\Phi,\mathcal{U}) = 
    \sup \limits_{\bm{\eta}_{0}=\hat{\bm{\eta}}_{0},\bm{\eta}_{0}, \hat{\bm{\eta}}_{0} \in \mathcal{U}} \left\|\Phi(\bm{\eta}_0) - \hat \Phi(\hat{\bm{\eta}}_0) \right\|
\end{align}
where $\rho(\Phi,\hat\Phi,\mathcal{U})$ is called model reduction precision. 
\end{definition}

In the framework of reachability analysis of neural networks, the following theorem presents a numerically tractable method to compute model reduction precision $\rho(\Phi,\hat\Phi,\mathcal{U})$.
\begin{theorem}\label{thm1}
Given neural network $\Phi$ and its reduced-size version $\hat \Phi$ with input set $\mathcal{U}$,  the model reduction precision $\rho(\Phi,\hat\Phi,\mathcal{U})$ can be computed by 
\begin{align}
    \rho(\Phi,\hat\Phi,\mathcal{U}) = \sup_{\tilde{\mathbf{\eta}}_0 \in {\mathcal{U}}}\left\|\tilde{\Phi}(\tilde{\bm{\eta}}_0)\right\|
\end{align}
where neural network $\tilde{\Phi}$ is an augmented neural network of $\Phi$ and $\hat{\Phi}$ defined as follows:
\begin{align}\label{eq:aug_nn}
\begin{cases}
    \tilde{\bm{\eta}}_{\ell} = \tilde{\phi}_{\ell}(\tilde{\mathbf{W}}_{\ell} \tilde{\bm{\eta}}_{\ell-1}+\tilde{\mathbf{b}}_\ell),~\ell = 1,\ldots,L+1
    \\
    \tilde{\bm{\eta}}_{L+1} = \tilde{\Phi}(\tilde{\bm{\eta}}_0)
    \end{cases}
\end{align}
in which input $\tilde{\bm{\eta}}_0 = \bm{\eta}_0 = \hat{\bm{\eta}}_0$ and
\begin{align} \label{eq:thm1_1}
    \tilde{\mathbf{W}}_{\ell} & = \begin{cases}
    \begin{bmatrix}
    \mathbf{W}_{1} \\ \hat{\mathbf{W}}_{1}
    \end{bmatrix}, &\ell = 1
    \\
    \begin{bmatrix}
    \mathbf{W}_{\ell} & \mathbf{0}_{{n_{\ell}} \times {\hat n_{\ell-1}}}
    \\
    \mathbf{0}_{{\hat n_{\ell}} \times {n_{\ell-1}}} & \hat{\mathbf{W}}_{\ell}
    \end{bmatrix}, &1 < \ell \le \hat L-1
    \\
    \begin{bmatrix}
    \mathbf{W}_{\ell} & \mathbf{0}_{{n_\ell} \times {\hat n_{\hat L-1}}}
    \\
    \mathbf{0}_{{\hat n_{\hat L-1}} \times {n_\ell}} & \mathbf{I}_{\hat n_{\hat L-1}}
    \end{bmatrix}, & \hat L \le \ell \le L-1
    \\
    \begin{bmatrix}
    \mathbf{W}_L & \mathbf{0}_{{n_L} \times {n_{\hat L-1}}}
    \\
    \mathbf{0}_{{n_{\hat L}} \times {n_{L-1}}} & \hat{\mathbf{W}}_{\hat L} 
    \end{bmatrix}, & \ell = L
    \\
    \begin{bmatrix}
    \mathbf{I}_{n_L} & -\mathbf{I}_{n_{\hat L}} 
    \end{bmatrix}, & \ell =L+1
    \end{cases}
    \end{align}
    
    \begin{align}
    \label{eq:thm1_2}
    \tilde{\mathbf{b}}_{\ell} &= \begin{cases}
    \begin{bmatrix}
    \mathbf{b}_\ell \\ 
    \hat{\mathbf{b}}_\ell
    \end{bmatrix}, & 1 \le \ell \le \hat L -1
    \\
    \begin{bmatrix}
    \mathbf{b}_\ell
    \\
    \mathbf{0}_{{\hat n_{\hat L-1}} \times 1}
    \end{bmatrix}, & \hat L \le \ell \le L-1 
    \\
    \begin{bmatrix}
    \mathbf{b}_L \\
    \hat{\mathbf{b}}_{\hat L}
    \end{bmatrix}, & \ell = L
    \\
    \begin{bmatrix}
    \mathbf{0}_{{(n_L+n_{\hat L})} \times 1} 
    \end{bmatrix}, & \ell = L+1
    \end{cases} 
    \\
    \label{eq:thm1_3}
    \tilde \phi_\ell(\cdot) &= \begin{cases}
    \begin{bmatrix}
    \phi_\ell(\cdot) \\   
    \hat \phi_\ell(\cdot)
     \end{bmatrix},& 1 \le \ell \le \hat L -1
    \\
     \begin{bmatrix}
      \phi_\ell(\cdot) \\  
      \mathsf{purelin}(\cdot)
    \end{bmatrix}, & \hat L \le \ell \le L-1 
    \\
    \    \begin{bmatrix}
    \phi_L(\cdot) \\   
    \hat \phi_{\hat L}(\cdot)
     \end{bmatrix}, & \ell = L
    \\
    \mathsf{purelin}(\cdot), & \ell = L+1
    \end{cases} 
\end{align}
where $\mathsf{purelin}(\cdot)$ is linear transfer function, i.e., $x=\mathsf{purelin}(x)$.
\end{theorem}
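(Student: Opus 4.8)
The plan is to reduce the entire theorem to a single pointwise identity: if the augmented network $\tilde{\Phi}$ of \eqref{eq:aug_nn} is fed the shared input $\tilde{\bm{\eta}}_0 = \bm{\eta}_0 = \hat{\bm{\eta}}_0$, then its output satisfies $\tilde{\Phi}(\tilde{\bm{\eta}}_0) = \Phi(\bm{\eta}_0) - \hat{\Phi}(\hat{\bm{\eta}}_0)$. Once this identity is established, the supremum defining $\rho$ in \eqref{def2_1} and the supremum in the theorem statement are both taken over the same admissible set $\{\bm{\eta}_0 \in \mathcal{U}\}$ (the constraint $\bm{\eta}_0 = \hat{\bm{\eta}}_0$ simply means a single point is fed to both networks), and their integrands agree pointwise since $\|\Phi(\bm{\eta}_0) - \hat{\Phi}(\hat{\bm{\eta}}_0)\| = \|\tilde{\Phi}(\tilde{\bm{\eta}}_0)\|$. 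Hence the two suprema coincide and the claimed formula follows at once. In other words, the whole content of the theorem is the verification that the block-structured weights \eqref{eq:thm1_1}, biases \eqref{eq:thm1_2}, and activations \eqref{eq:thm1_3} realize the difference map.

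First I would prove the identity by forward induction on the layer index $\ell$, tracking the augmented state $\tilde{\bm{\eta}}_\ell$ against the states $\bm{\eta}_\ell$ and $\hat{\bm{\eta}}_\ell$ of the two constituent networks. The invariant to carry for $0 \le \ell \le \hat L - 1$ is that the augmented state is the stacked pair $\tilde{\bm{\eta}}_\ell = [\bm{\eta}_\ell^\top, \hat{\bm{\eta}}_\ell^\top]^\top$. The base case $\ell = 1$ rests on the observation that the single shared input, multiplied by the stacked matrix $[\mathbf{W}_1^\top, \hat{\mathbf{W}}_1^\top]^\top$ with the concatenated bias and activation of \eqref{eq:thm1_3}, reproduces both first-layer pre- and post-activations simultaneously; the inductive step for $1 < \ell \le \hat L - 1$ is immediate because $\tilde{\mathbf{W}}_\ell$ is block diagonal, so the two subnetworks evolve independently and without cross-coupling.

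Next I would handle the padding region $\hat L \le \ell \le L-1$, where the reduced network has exhausted its layers while $\Phi$ must still advance. Here I would verify that the identity block $\mathbf{I}_{\hat n_{\hat L-1}}$, the zero bias, and the $\mathsf{purelin}$ activation freeze the bottom block at $\hat{\bm{\eta}}_{\hat L-1}$ while the top block keeps executing layers $\hat L,\dots,L-1$ of $\Phi$; then at $\ell = L$ the deferred block $\hat{\mathbf{W}}_{\hat L}$, together with $\hat{\mathbf{b}}_{\hat L}$ and $\hat\phi_{\hat L}$, finally applies the reduced network's output layer, yielding $\tilde{\bm{\eta}}_L = [\Phi(\bm{\eta}_0)^\top, \hat{\Phi}(\hat{\bm{\eta}}_0)^\top]^\top$. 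The appended layer $\ell = L+1$ with weight $[\mathbf{I}_{n_L}, -\mathbf{I}_{n_{\hat L}}]$, zero bias, and linear activation then outputs precisely $\Phi(\bm{\eta}_0) - \hat{\Phi}(\hat{\bm{\eta}}_0)$, which closes the induction and hence the proof. I expect the only delicate point to be the dimensional bookkeeping in this padding region: one must ensure the frozen block retains width $\hat n_{\hat L-1}$ so that $\hat{\mathbf{W}}_{\hat L} \in \mathbb{R}^{n_{\hat L}\times \hat n_{\hat L-1}}$ remains conformable when its application is postponed to layer $L$, and one should separately confirm that the degenerate case $\hat L = L$ (empty padding region) and the boundary layer indices behave consistently with Assumption \ref{assump1}.
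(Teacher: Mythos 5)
Your proposal is correct and follows essentially the same route as the paper's own proof: a layer-by-layer forward induction establishing the stacked-pair invariant $\tilde{\bm{\eta}}_\ell = [\bm{\eta}_\ell^\top, \hat{\bm{\eta}}_\ell^\top]^\top$ through the parallel region, the frozen bottom block through the padding region $\hat L \le \ell \le L-1$, the deferred application of $\hat{\mathbf{W}}_{\hat L}$ at layer $L$, and the comparison layer $L+1$ yielding $\tilde{\Phi}(\tilde{\bm{\eta}}_0) = \Phi(\bm{\eta}_0) - \hat{\Phi}(\hat{\bm{\eta}}_0)$, from which the supremum identity is immediate. Your added attention to the conformability of $\hat{\mathbf{W}}_{\hat L}$ in the padding region and to the degenerate case $\hat L = L$ is a reasonable refinement that the paper's proof passes over silently.
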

\begin{proof}
Given an input $\bm{\eta}_0 \in \mathcal{U}$ and $\tilde{\bm{\eta}}_0 = \hat{\bm{\eta}}_0 = \bm{\eta}_0$ and considering  layers $1 \le \ell \le \hat L-1$ of $\tilde \Phi$, we have
 \begin{align} \label{eq:pthm1_1}
     \tilde{\bm{\eta}}_{\hat L-1} 
     =
     \begin{bmatrix}
      \phi_{L-1}\circ \cdots \circ\phi_{1}( \mathbf{W}_{1}\bm{\eta}_{0}+ \mathbf{b}_1) \\  \hat \phi_{\hat L-1}\circ \cdots \circ\hat\phi_1(\hat{\mathbf{W}}_1\hat{\bm{\eta}}_0+ \hat{\mathbf{b}}_1)
     \end{bmatrix}
 \end{align}

Specifically, we consider $\ell = 1$ such that
 \begin{align} \label{eq:pthm1_2}
     \tilde{\bm{\eta}}_1 
     =
     \begin{bmatrix}
      \phi_1( \mathbf{W}_1\bm{\eta}_0+ \mathbf{b}_1) \\   
      \hat \phi_1(\hat{\mathbf{W}}_1\hat{\bm{\eta}}_0+ \hat{\mathbf{b}}_1)
     \end{bmatrix} =      \begin{bmatrix} 
     \bm{\eta}_1
       \\  
       \hat{\bm{\eta}}_1
     \end{bmatrix}
 \end{align}
 
Moreover, when $1 < \ell \le \hat L -1$, it leads to
\begin{align} \label{eq:pthm1_3}
     \tilde{\mathbf{W}}_{\ell} \tilde{\bm{\eta}}_{\ell-1}+ \tilde{\mathbf{b}}_{\ell} = \begin{bmatrix}
    \mathbf{W}_{\ell} {\bm{\eta}}_{\ell-1}+ {\mathbf{b}}_{\ell}
    \\
     \hat{\mathbf{W}}_{\ell} \hat{\bm{\eta}}_{\ell-1}+ \hat{\mathbf{b}}_{\ell}
    \end{bmatrix},~1 < \ell \le \hat L -1
 \end{align}
 
Staring from $\ell=1$ and recursively using (\ref{eq:pthm1_3}) into (\ref{eq:pthm1_1}), one can obtain
\begin{align}
     \tilde{\bm{\eta}}_{\hat L-1}=\begin{bmatrix} 
     \bm{\eta}_{\hat L-1}
       \\  
       \hat{\bm{\eta}}_{\hat L-1}
     \end{bmatrix}
\end{align}

Furthermore, when $\hat L \le \ell \le L-1$, one can derive
\begin{align}
    \tilde{\mathbf{W}}_\ell \tilde{\bm{\eta}}_{\ell-1} + \tilde{\mathbf{b}}_\ell&= \begin{bmatrix}
    \mathbf{W}_\ell\bm{\eta}_{\ell-1} + \mathbf{b}_{\ell}
    \\
    \hat{\bm{\eta}}_{\ell-1}
    \end{bmatrix},~\hat L \le \ell \le L-1
\end{align}
and 
 \begin{align} \nonumber
     \bm{\eta}_{L-1} &= \phi_{L-1}\circ \cdots \circ\phi_{\hat L}( \mathbf{W}_{\hat L}\bm{\eta}_{\hat L-1}+ \mathbf{b}_{\hat L-1})
 \end{align}
which leads to 
 \begin{align}
     \tilde{\bm{\eta}}_{L-1} 
     =&
     \begin{bmatrix} \bm{\eta}_{L-1}
       \\  
       \hat{\bm{\eta}}_{\hat L-1}
     \end{bmatrix}
 \end{align}
 
 Then, when $\ell=L$, it yields that
\begin{align}
    \tilde{\mathbf{W}}_L \tilde{\bm{\eta}}_{L-1} + \tilde{\mathbf{b}}_L &= \begin{bmatrix}
    \mathbf{W}_L\bm{\eta}_{L-1} + \mathbf{b}_L
    \\
     \hat{\mathbf{W}}_{\hat L}\hat{\bm{\eta}}_{\hat L-1}+\hat{\mathbf{b}}_{\hat L}
    \end{bmatrix}
 \end{align}

Thus, one can obtain
\begin{align}
     \tilde{\bm{\eta}}_{L} 
     =&
     \begin{bmatrix} \bm{\eta}_{L}
       \\  
       \hat{\bm{\eta}}_{\hat L}
     \end{bmatrix}
 \end{align}
 
 At last, when $\ell=L+1$, the following result can be obtained
 \begin{align} \nonumber
     \tilde{\bm{\eta}}_{L+1}=\tilde{\mathbf{W}}_{L+1} \tilde{\bm{\eta}}_L + \tilde{\mathbf{b}}_{L+1}  = \begin{bmatrix}
    \mathbf{I}_{n_L} & -\mathbf{I}_{n_{\hat L}} 
    \end{bmatrix} \begin{bmatrix} \bm{\eta}_{L}
       \\  
       \hat{\bm{\eta}}_{\hat L}
     \end{bmatrix}
 \end{align}
 which implies that
$
      \tilde{\bm{\eta}}_{L+1}=\bm{\eta}_{L}-\hat{\bm{\eta}}_{\hat L}
$.
Therefore, we can conclude that $\tilde{\Phi}(\tilde{\bm{\eta}}_0) =\Phi(\bm{\eta}_0) - \hat \Phi(\hat{\bm{\eta}}_0) $ as long as $\tilde{\bm{\eta}}_0 = \hat{\bm{\eta}}_0 = \hat \eta_0$ and 
\begin{align} \label{}
   \rho(\Phi,\hat\Phi,\mathcal{U}) = 
    \sup_{\tilde{\mathbf{\eta}}_0 \in {\mathcal{U}}}\left\|\tilde{\Phi}(\tilde{\bm{\eta}}_0)\right\|
\end{align}
The proof is complete.
\end{proof}

\begin{remark}
In the process of augmenting neural networks $\Phi$ and $\hat \Phi$ into $\tilde{\Phi}$, the case of $\ell=1$ in  (\ref{eq:thm1_1})--(\ref{eq:thm1_3}) ensures that  augmented neural network $\tilde{\Phi}$ takes the one same input $\bm{\eta}_{0}$ for the subsequent calls involving both processes of $\Phi$ and its reduced-size version $\hat \Phi$. Then, for $1 < \ell \le \hat L-1$, augmented neural network $\tilde{\Phi}$ conducts the computation of $\Phi$ and and its reduced-size version $\hat \Phi$ parallelly for the hidden layers of $1 < \ell \le \hat L-1$. When $\hat L \le \ell \le L-1$,  the hidden layers of reduced-size neural network $\hat \Phi$ which has fewer hidden layers are expanded to match the number of layers of the original neural network $\Phi$ with a larger number of hidden layers, but the expanded layers are forced to pass the information to subsequent layers without any changes, i.e., the weight matrices of the expanded hidden layers are identity matrices, and the bias vectors are zero vectors. This expansion is formalized as in the case of $\hat L \le \ell \le L-1$ in  (\ref{eq:thm1_1})--(\ref{eq:thm1_3}). Moreover, as $\ell=L$, this layer is a combination of output layers of both $\Phi$ and $\hat \Phi$ to generate the same outputs of $\Phi$ and $\hat \Phi$. At last, a comparison layer $L+1$ is added to compute the exact difference between the original neural network $\Phi$ and its reduced-size version $\hat \Phi$.
\end{remark}

\begin{remark}
As shown in Theorem \ref{thm1}, the key of computing model reduction precision $\rho(\Phi,\hat\Phi,\mathcal{U})$ is to compute the maximal output value of augmented neural network $\tilde{\Phi}$ with respect to input set $\mathcal{U}$. This can be efficiently done by neural network reachability analysis. For instance, as in NNV neural network reachability analysis tool, the reachable sets are in the form of a family of polyhedral sets \cite{tran2020nnv}, and in the IGNNV tool, the output reachable set is a family of interval sets \cite{xiang2018output,xiang2020reachable}. With the reachable set $\tilde{\mathcal{Y}}$, the model reduction precision $\rho(\Phi,\hat\Phi,\mathcal{U})$ can be easily obtained by searching for the maximal value of $\left\|\tilde{\bm{\eta}}_{L+1}\right\| $ in $\tilde{\mathcal{Y}}$, e.g., testing throughout a finite number of vertices in polyhedral sets.
\end{remark}

\section{Safety Verification of Neural Network Control Systems}

In this section, we apply neural network model reduction and model reduction precision to a neural network control system. By replacing the original neural network controller with a reduced-size neural network, the computational cost of safety verification can be significantly reduced. Moreover, the model reduction precision allows an over-estimation of the difference in behavior between the original neural network and the reduced-size one. For the reachability analysis of neural networks, the following result can be obtained. 

\begin{proposition}
\label{proposition_1}
Given neural network $\Phi$, its reduced-size version $\hat \Phi$  with output set $\hat{\mathcal{Y}}$, and model reduction precision $\rho(\Phi,\hat\Phi,\mathcal{U})$, the output reachable set of original neural network $\Phi$ satisfies
\begin{align} \label{eq:pro_1}
  \mathcal{Y} \subseteq \hat{\mathcal{Y}} \oplus \mathcal{B}(\mathbf{0}_{n_{L} \times 1},\frac{\rho(\Phi,\hat\Phi,\mathcal{U})}{2})
\end{align}
where $\mathcal{B}(\mathbf{0}_{n \times 1},r)$ denotes a ball centered at $\mathbf{0}_{n \times 1}$ with a radius of $r$, and $\oplus$ denotes the Minkowski sum.
\end{proposition}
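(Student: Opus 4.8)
The plan is to prove the inclusion pointwise, matching each output of $\Phi$ to a partner output of $\hat\Phi$ produced by the \emph{same} input. First I would take an arbitrary $\mathbf{y} \in \mathcal{Y}$; by the definition of the output set, $\mathbf{y} = \Phi(\bm{\eta}_0)$ for some $\bm{\eta}_0 \in \mathcal{U}$. The natural partner is $\hat{\mathbf{y}} = \hat\Phi(\bm{\eta}_0)$, evaluated at the identical input, which lies in $\hat{\mathcal{Y}}$ precisely because $\bm{\eta}_0 \in \mathcal{U}$. Establishing this same-input correspondence is the crux of tying the two output sets together, and it is exactly the configuration $\bm{\eta}_0 = \hat{\bm{\eta}}_0$ that appears under the supremum in the definition of model reduction precision (\ref{def2_1}).

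Next I would bound the mismatch. Since the pair with $\bm{\eta}_0 = \hat{\bm{\eta}}_0 \in \mathcal{U}$ is admissible in (\ref{def2_1}), the quantity $\|\Phi(\bm{\eta}_0) - \hat\Phi(\bm{\eta}_0)\|$ is dominated by the supremum, so $\|\mathbf{y} - \hat{\mathbf{y}}\| \le \rho(\Phi,\hat\Phi,\mathcal{U})$. Decomposing $\mathbf{y} = \hat{\mathbf{y}} + (\mathbf{y} - \hat{\mathbf{y}})$ then writes $\mathbf{y}$ as a point of $\hat{\mathcal{Y}}$ plus a vector whose norm is at most $\rho(\Phi,\hat\Phi,\mathcal{U})$, that is, an element of the centered ball; by the definition of the Minkowski sum this yields $\mathbf{y} \in \hat{\mathcal{Y}} \oplus \mathcal{B}(\mathbf{0}_{n_L \times 1}, \rho(\Phi,\hat\Phi,\mathcal{U}))$. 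Since $\mathbf{y}$ is arbitrary, the inclusion over all of $\mathcal{Y}$ follows.

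The set-membership and Minkowski-sum bookkeeping are routine; the step I expect to be the genuine obstacle is reconciling the radius. The direct argument above delivers a ball of radius $\rho(\Phi,\hat\Phi,\mathcal{U})$, whereas (\ref{eq:pro_1}) claims the smaller radius $\tfrac{1}{2}\rho(\Phi,\hat\Phi,\mathcal{U})$. This gap is not cosmetic: taking $\mathcal{U}$ to be a single point collapses $\mathcal{Y}$ and $\hat{\mathcal{Y}}$ to two points exactly $\rho(\Phi,\hat\Phi,\mathcal{U})$ apart, which forces any admissible radius to be at least $\rho(\Phi,\hat\Phi,\mathcal{U})$. To recover the factor $\tfrac{1}{2}$ one would have to recenter, for instance replacing $\hat{\mathcal{Y}}$ by the midpoint set $\{\tfrac{1}{2}(\Phi(\bm{\eta}_0) + \hat\Phi(\bm{\eta}_0)) : \bm{\eta}_0 \in \mathcal{U}\}$, each of whose members is within $\tfrac{1}{2}\rho(\Phi,\hat\Phi,\mathcal{U})$ of the corresponding $\mathbf{y}$. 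Absent such a recentering I would state and prove the inclusion with radius $\rho(\Phi,\hat\Phi,\mathcal{U})$, which is the tight constant for this formulation.
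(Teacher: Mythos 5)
Your argument is the right one, and it is exactly what the paper's one-line proof (``this follows from the definition of $\rho$'') intends: pair each $\mathbf{y}=\Phi(\bm{\eta}_0)\in\mathcal{Y}$ with the partner $\hat\Phi(\bm{\eta}_0)\in\hat{\mathcal{Y}}$ produced by the identical input, bound the mismatch by the supremum in (\ref{def2_1}), and conclude via the Minkowski sum. Your objection to the radius is also well founded and is the substantive point here. The quantity $\rho(\Phi,\hat\Phi,\mathcal{U})$ is defined as a supremum of the \emph{norm of the difference vector}, not as a diameter, so $\Phi(\bm{\eta}_0)-\hat\Phi(\bm{\eta}_0)$ can itself have norm as large as $\rho(\Phi,\hat\Phi,\mathcal{U})$; covering it by a ball centered at the origin therefore requires radius $\rho(\Phi,\hat\Phi,\mathcal{U})$, and your singleton-$\mathcal{U}$ example shows that the inclusion with radius $\tfrac{1}{2}\rho(\Phi,\hat\Phi,\mathcal{U})$ genuinely fails whenever $\Phi$ and $\hat\Phi$ disagree at that input. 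The paper's proof asserts the result ``straightforwardly by the definition'' and never accounts for the factor $\tfrac{1}{2}$, so it does not close this gap either; the statement should be read with radius $\rho(\Phi,\hat\Phi,\mathcal{U})$ (or with the midpoint recentering you describe, which does recover the factor $\tfrac{1}{2}$ at the cost of replacing $\hat{\mathcal{Y}}$ by a different set). Note that the same correction must propagate to the inflated input set $\mathcal{U}_e$ in Algorithm \ref{alg1}, whose soundness otherwise inherits the same defect.
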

\begin{proof}
This can be obtained straightforwardly by the definition of model reduction precision $\rho(\Phi,\hat\Phi,\mathcal{U})$ which characterizes the maximal difference between the outputs of $\Phi$ and $\hat \Phi$. The proof is complete. 
\end{proof}

\begin{algorithm}[b!]
\SetAlgoLined
\SetKwInOut{Input}{Input}
\SetKwInOut{Output}{Output}
\SetKw{Return}{return}
\Input{System dynamics $f$, $h$; Reduced-size neural network $\hat \Phi$; Model reduction precision $\rho(\Phi,\hat\Phi,\mathcal{U})$; Initial set $\mathcal{X}_0$; Input set $\mathcal{V}$}
\Output{Reachable set estimation $\mathcal{R}_e([t_0,t_f])$.}
\Fn{\texttt{reachNNCS}}{
\tcc{Initialization}
$k \gets 0$ \\
$t_{K+1}\gets t_f$ \\
$\mathcal{R}_e(t_0) \gets \mathcal{X}_0$ \\
\tcc{Iteration for all sampling intervals}
\While{$k \leq K$}{
$\mathcal{Y}_e(t_k) \gets \texttt{reachODEy}(h,\mathcal{R}_e(t_k))$ \\
$\mathcal{H} \gets \mathcal{Y}_e(t_k) \times \mathcal{V}$ \\
$\hat{\mathcal{U}}_e(t_k) \gets \texttt{reachNN}(\hat\Phi,\mathcal{H})$
\\
$\mathcal{U}_e \gets \hat{\mathcal{U}}(t_k) \oplus \mathcal{B}(\mathbf{0}_{n_{L} \times 1},\frac{\rho(\Phi,\hat\Phi,\mathcal{U})}{2})$
\\
$\mathcal{R}_e([t_k,t_{k+1}]) \gets \texttt{reachODEx}(f,\mathcal{U}_e,\mathcal{R}_e(t_k))$ \\
$k \gets k+1$\
}
\Return{$\mathcal{R}_e([t_0,t_f]) \gets \bigcup_{k=0,1\ldots,K}\mathcal{R}_e([t_k,t_{k+1}])$}  
}
 \caption{Reachable Set computation for Neural Network Control Systems (\ref{Dynsys_nn})} \label{alg1}
\end{algorithm}

The reachable set estimation for a sampled-data neural network control system in the form of (\ref{Dynsys_nn}) generally involves two parts: 1) Output set computation for neural network controllers denoted by 
\begin{align}
    \mathcal{Y} = \texttt{reachNN}(\Phi,\mathcal{U})
\end{align}
which can be efficiently obtained by neural network reachability tools such as \cite{xiang2020reachable,tran2020nnv} and (\ref{eq:pro_1}), and 2) Reachable set computation of system (\ref{DynSys}). For the reachable set computation of systems described by ODEs, there exist a variety of approaches and tools such as those well-developed in \cite{frehse2011spaceex,althoff2015introduction,chen2013flow,bak2017hylaa}. The following functions are given to denote the reachable set estimation for sampled data ODE models during $[t_k,t_{k+1}]$,
\begin{align}
\mathcal{R}_e([t_{k},t_{k+1}]) & = \texttt{reachODEx}(f,\mathcal{U}(t_k),\mathcal{R}_e(t_k))
	\\
\mathcal{Y}_e(t_k) & = \texttt{reachODEy}(h,\mathcal{R}_e(t_k))
\end{align}
where $\mathcal{U}(t_k)$ is the input set for sampling interval $[t_k,t_{k+1}]$. $\mathcal{R}_e(t_{k})$ and $\mathcal{R}_e([t_{k},t_{k+1}])$ are the estimated reachable sets for state $\mathbf{x}(t)$ at sampling instant $t_k$ and interval $[t_k,t_{k+1}]$, respectively. $\mathcal{Y}_e(t_k)$ is the estimated reachable set for output $\mathbf{y}(t_k)$. With the results in Proposition \ref{proposition_1}, we can use the reduced-size neural network to compute output set $\hat{\mathcal{U}}(t_k)$ which is much more computationally efficient due to its smaller size, and replace the output set of $\mathcal{U}(t_k)$ by $\hat{\mathcal{U}}(t_k) \oplus \mathcal{B}(\mathbf{0}_{n_{L} \times 1},\frac{\rho(\Phi,\hat\Phi,\mathcal{U})}{2})$ where $\rho(\Phi,\hat\Phi,\mathcal{U})$ is normally obtained through a one-time offline computation. The reachable set computation process is shown in Algorithm \ref{alg1}. 

Based on the estimated reachable set obtained by Algorithm \ref{alg1}, the safety property can be examined with the existence of intersections between the estimated reachable set and unsafe region $\neg \mathcal{S}$. 
	
\begin{proposition}\label{pro2}
Consider a neural network control system in the form of (\ref{Dynsys_nn}) with a safety specification $\mathcal{S}$, the system is safe in $[t_0,t_f]$, if $ \mathcal{R}_e([t_0,t_f]) \cap \neg \mathcal{S} = \emptyset$, where $\mathcal{R}_e([t_0,t_f])$ is an estimated reachable set obtained by Algorithm \ref{alg1}.
\end{proposition}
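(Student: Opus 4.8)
The plan is to establish that Proposition \ref{pro2} follows almost immediately from the soundness of the reachable set over-approximation produced by Algorithm \ref{alg1}, combined with the safety condition already formalized in \eqref{verification}. The central claim I need is that the set $\mathcal{R}_e([t_0,t_f])$ returned by Algorithm \ref{alg1} is a genuine over-approximation of the true reachable set $\mathcal{R}([t_0,t_f])$ of the neural network control system \eqref{Dynsys_nn}, in the sense of Definition \ref{def2}. Once that containment $\mathcal{R}([t_0,t_f]) \subseteq \mathcal{R}_e([t_0,t_f])$ is in hand, the conclusion is a one-line set-theoretic argument: if the over-approximation is disjoint from the unsafe region, then so is the true reachable set, which is exactly the safety definition.

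First I would argue that the set fed to the ODE solver at each sampling interval contains the true control input set. The true controller is $\Phi$, but the algorithm computes $\hat{\mathcal{U}}(t_k) = \texttt{reachNN}(\hat\Phi,\mathcal{H})$ using the reduced-size network $\hat\Phi$ and then inflates it by a ball of radius $\rho(\Phi,\hat\Phi,\mathcal{U})/2$. By Proposition \ref{proposition_1}, the true output set $\mathcal{U}(t_k)$ generated by $\Phi$ satisfies $\mathcal{U}(t_k) \subseteq \hat{\mathcal{U}}(t_k) \oplus \mathcal{B}(\mathbf{0}_{n_L \times 1}, \rho(\Phi,\hat\Phi,\mathcal{U})/2) = \mathcal{U}_e$. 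Hence the inflated input set $\mathcal{U}_e$ soundly over-approximates the control inputs that the original closed-loop system can actually produce on that interval.

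Next I would propagate this containment through the interval iteration by induction on the sampling index $k$. The base case $\mathcal{R}_e(t_0) = \mathcal{X}_0$ holds by initialization. For the inductive step, assuming $\mathcal{R}(t_k) \subseteq \mathcal{R}_e(t_k)$, I would invoke the soundness of the ODE reachability routines $\texttt{reachODEy}$ and $\texttt{reachODEx}$ (which I may take as given from the cited tools such as \cite{chen2013flow,bak2017hylaa}): because $\texttt{reachODEy}$ over-approximates the output map $h$, because $\mathcal{U}_e$ over-approximates the true input set as just established, and because $\texttt{reachODEx}$ over-approximates the flow of $f$ over $[t_k,t_{k+1}]$ given sound input and state sets, the resulting $\mathcal{R}_e([t_k,t_{k+1}])$ contains the true reachable set over that interval. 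Taking the union over all $k$ then yields $\mathcal{R}([t_0,t_f]) \subseteq \mathcal{R}_e([t_0,t_f])$. Finally, given the hypothesis $\mathcal{R}_e([t_0,t_f]) \cap \neg\mathcal{S} = \emptyset$, monotonicity of intersection gives $\mathcal{R}([t_0,t_f]) \cap \neg\mathcal{S} = \emptyset$, which by Definition 5 (condition \eqref{verification}) certifies safety over $[t_0,t_f]$.

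I do not expect a genuine obstacle here, since the proposition is essentially a corollary of Proposition \ref{proposition_1} plus the assumed soundness of the underlying reachability engines; the author's own proof is likely to be only a sentence or two. The single point requiring care is the over-approximation of the \emph{continuous} flow between sampling instants: the argument relies on $\texttt{reachODEx}$ correctly bounding the trajectory for the full interval $[t_k,t_{k+1}]$ under a constant (but set-valued) control input, rather than merely at the endpoints. This is exactly the guarantee the cited ODE reachability tools provide, so the chief task is simply to state that assumption explicitly and chain the three soundness facts—Proposition \ref{proposition_1} for the controller, and the two ODE routines—together with the inductive containment.
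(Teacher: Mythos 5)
Your proof is correct and is essentially the intended argument: the paper states Proposition \ref{pro2} without any proof at all, treating it as an immediate consequence of Definition 5 together with the fact that Algorithm \ref{alg1} returns an over-approximation of the true reachable set. Your explicit chain — Proposition \ref{proposition_1} for soundness of the inflated control input set, soundness of \texttt{reachODEx}/\texttt{reachODEy}, induction over sampling intervals, and the final monotonicity-of-intersection step — simply supplies the details the paper leaves implicit, and identifies correctly that these soundness assumptions on the underlying reachability engines are the only substantive inputs.
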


\section{Evaluation on Adaptive Cruise Control Systems}

In this section, our approach will be evaluated by the safety verification of an Adaptive Cruise Control (ACC) system equipped with a neural network controller as depicted in Fig. \ref{acc}. 
The system dynamics is in the form of
\begin{align}
    \begin{cases}
    \Dot{x}_l(t) = v_l (t) \\
    \Dot{v}_l(t) = \gamma_l(t) \\
    \Dot{\gamma}_l(t) = -2\gamma_l(t) + 2\alpha_l(t) - \mu v^2_l(t) \\
    \Dot{x}_e(t) = v_e (t) \\
    \Dot{v}_e(t) = \gamma_e(t) \\
    \Dot{\gamma}_e(t) = -2\gamma_e(t) + 2\alpha_e(t) - \mu v^2_e(t)
    \end{cases}
\end{align}
where $x_l(x_e)$, $v_l(v_e)$ and $\gamma_l(\gamma_e)$ are the position, velocity and actual acceleration of the lead (ego) car, respectively. $\alpha_l(\alpha_e)$ is the acceleration control input applied to the lead (ego) car, and $\mu = 0.001$ is the friction parameter. The ACC controller we considered here is a $5 \times 20$ feed-forward neural network with ReLU as its activation functions. The sampling scheme is considered as a periodic sampling every 0.01 seconds, i.e., $t_{k+1} - t_k = 0.01$ seconds.

\begin{figure}[b!]
\centering
	\includegraphics[width=8cm]{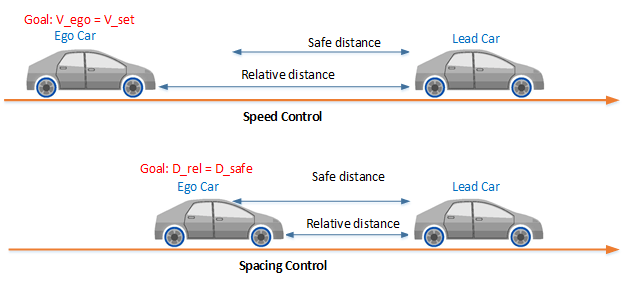}
	\caption{Adaptive cruise control system model \cite{xiang2020reachable}}
	\label{acc} 
\end{figure}

The sampled-data neural network controller for the acceleration control of the ego car is in the form of 
\begin{align}
    \alpha_e(t) = \Phi (v_{set}(t_k), t_{gap}, v_e(t_k), d_{rel}(t_k), v_{rel}(t_k))
\end{align}
in which $k \in [t_k, t_{k+1}]$.
The threshold of the safe distance between the two cars satisfies a function as defined below in the form of 
\begin{align}
    d_{safe} > d_{thold} = d_{def} + t_{gap} \cdot v_e
\end{align}
where $d_{safe}$ is the safe distance between the ego car and lead car, $d_{thold}$ is the threshold of the safe distance, $d_{def}$ is the standstill default spacing, and $t_{gap}$ is the time gap between the vehicles. The safety verification scenario we consider is that the lead car decelerates with $\alpha_l = 2$ to reduce its speed as an emergency braking occurs. We expect that the ego car guided by a neural network control system is able to maintain a safe relative distance to the lead car to avoid the collision. 

The safety specification parameter we consider in the simulation is $t_{gap} = 1.4$ seconds and $d_{def} = 10$. The time horizon that we want to verify is 3 seconds, i.e., 300 sampling intervals, after the emergency braking comes into play. The initial sets are $x_l(0) \in [94, 96]$, $v_l(0) \in [30, 30.2]$, $\gamma_l(0) = 0$, $x_e(0) \in [10, 11]$, $v_e(0) \in [30, 30.2]$, and $\gamma_e(0) = 0$. 

As mentioned above, the size of the hidden layer of the original neural network controller is $5 \times 20$, i.e., 5 layers with 20 neurons in each layer, Through neural network model reduction, we replace the original neural network controller with a reduced-size neural network of hidden layer size $2 \times 5$, i.e., 2 layers with 5 neurons in each layer, combined with a model reduction precision $\rho = 1.0234$. For
the continuous-time nonlinear dynamics, we use CORA \cite{althoff2015introduction} to do the reachability analysis for the time interval between
two sampling instants, and IGNNV in \cite{xiang2020reachable} is used for neural network reachability analysis. 

\begin{figure}[b!]
\centering
	\includegraphics[width=8.5cm]{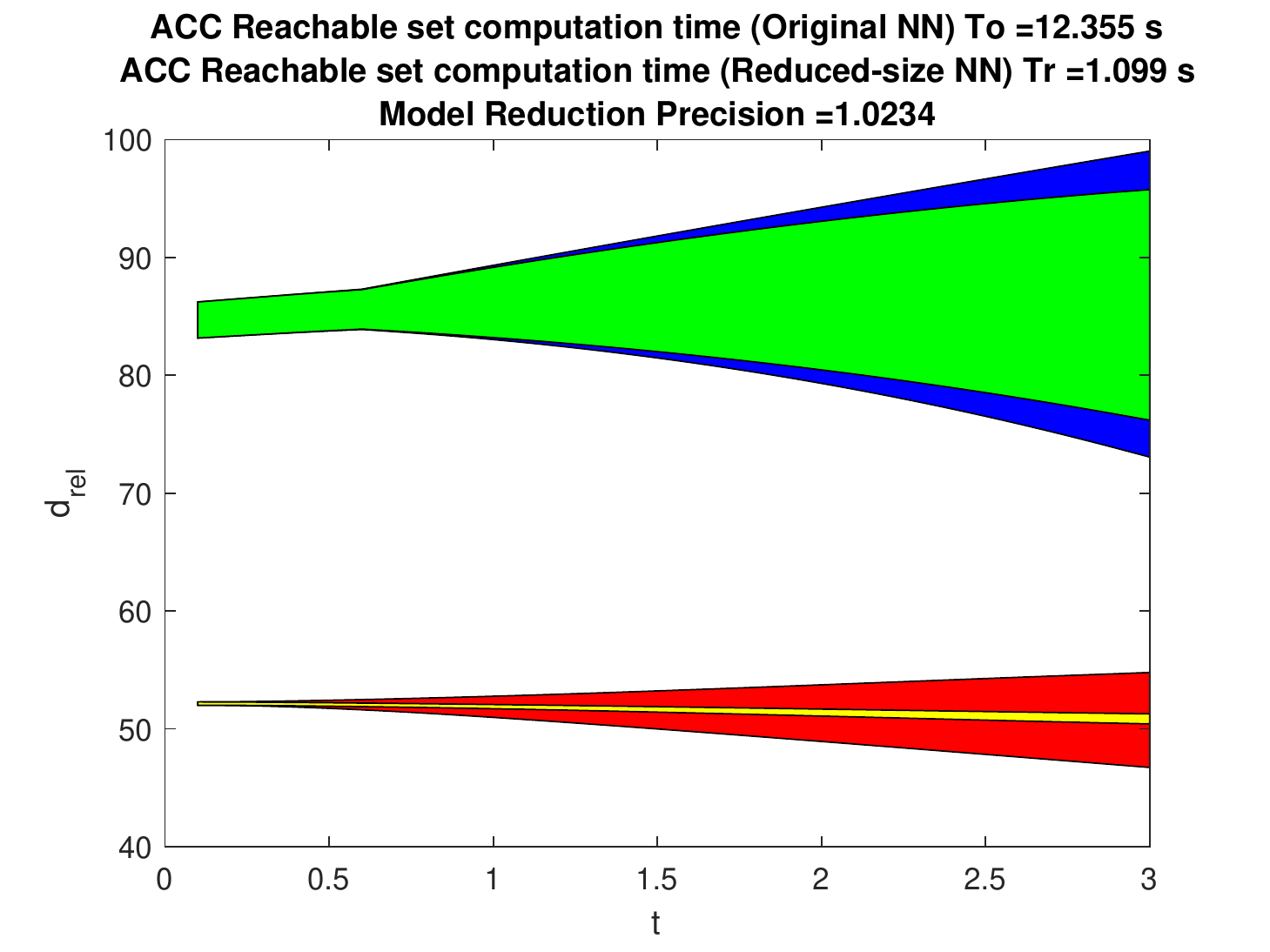}
	\caption{Reachable set of ACC systems. The green (blue) pipe indicates the output reachable set of the dynamic system with the original (reduced-size) neural network. It is obvious that the blue pipe completely wraps around the green pipe. The yellow (red) pipe denotes the safe distance threshold region for the dynamical system with the original (reduced-size) neural network.}
	\label{acc_reach} 
\end{figure}

\begin{figure}[t!]
\centering
	\includegraphics[width=8.5cm]{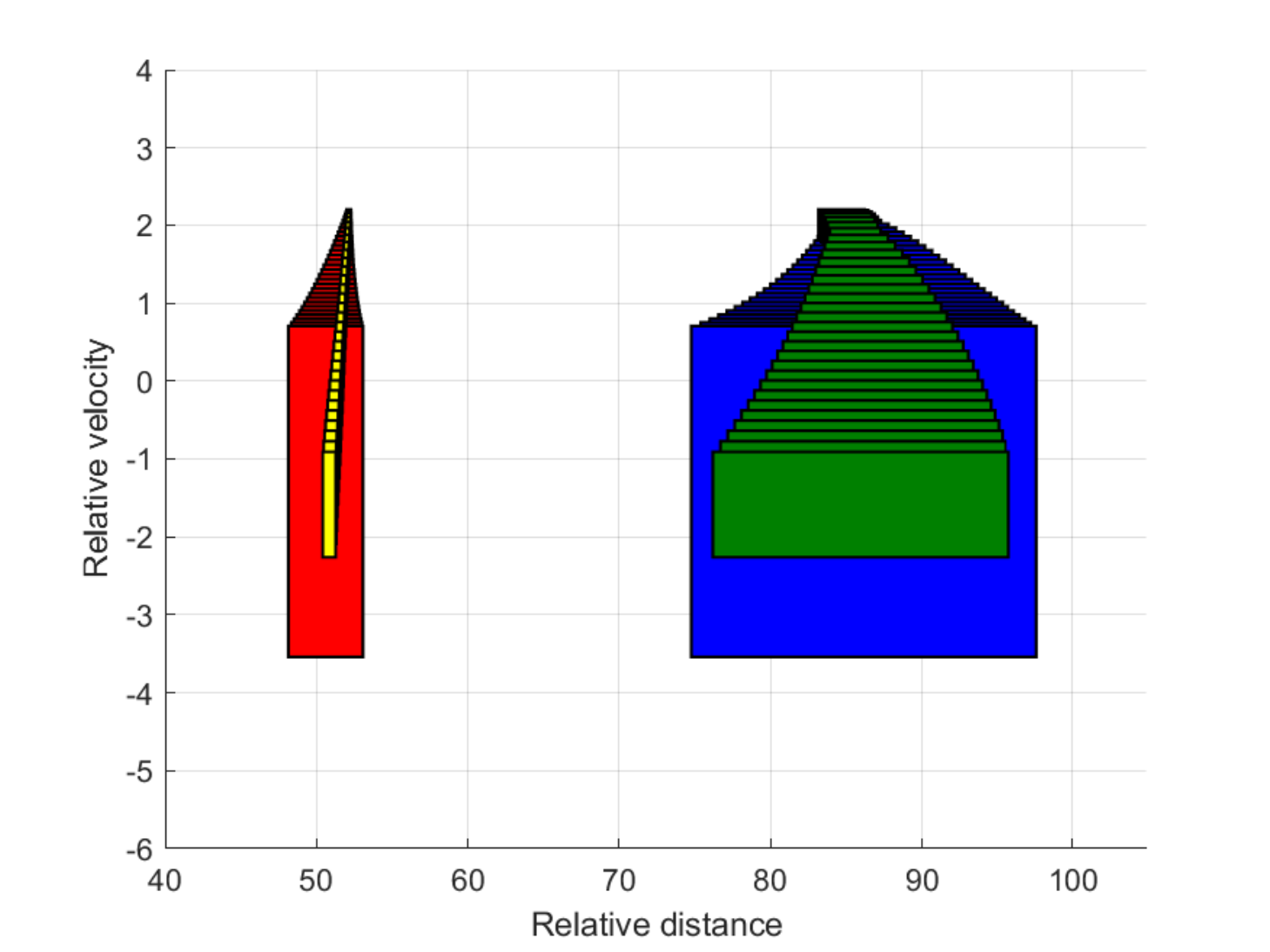}
	\caption{Reachable set estimation for relative distance and the relative velocity between lead and ego cars. The green (blue) pipe indicates the reachable set of the dynamical system with the original (reduced-size) neural network. The yellow (red) pipe denotes the safe distance threshold and the relative velocity between lead and ego cars for the dynamical system with the original (reduced-size) neural network.}
	\label{acc_reach_2d} 
\end{figure}

\begin{table}[t!]
	\centering
	\caption{Comparison of ACC reachable set calculation times}\label{tabl}
	\begin{tabular}{|cc|}
		\hline 
\textbf{Dynamical Systems} & \textbf{Computational Time}\\
		\hline\hline
ACC with original neural network    &$12.355$ s  \\
		\hline
ACC with reduced-size neural network  &$1.099$ s\\

\hline
	\end{tabular}
\end{table}

The output reachable set of the ACC system for the relative distance between the lead car and the ego car over time can be shown in Figs. \ref{acc_reach} and \ref{acc_reach_2d}. Notably, the computation time has been significantly reduced from 12.355 seconds to 1.099 seconds when using a reduced-size neural network as shown in Fig. \ref{acc_reach} and Table \ref{tabl}. The system is safe when the output reachable set of relative distances does not intersect with the safe distance threshold region.

In summary, the simulations show that the closed-loop system with the reduced-size neural network can be used for safety verification of the original system as long as the model reduction precision can be provided. The reduced-size neural network can significantly reduce the computational time of the entire process of solving the neural network control system for the output reachable set.

\section{Conclusions}
This paper investigates the problem of simplifying the safety verification of neural network control systems, proposes a concept of model reduction precision that characterizes the minimum upper bound on the outputs between a neural network and its reduced-size one, and proposes an algorithm to calculate the model reduction precision. By using a reduced-size neural network as the neural network controller and introducing the model reduction precision in the computation of the output reachable. Combined with the calculation of reachable sets for dynamical systems, we give the reachable set computation algorithm based on model reduction of neural network control systems. In this way, we can obtain the over-approximated output reachable set of the original neural network control system with less computation time and enable a simplification of safety verification processes. The developed results are applied to the ACC system to verify its effectiveness and feasibility.

\bibliographystyle{IEEEtran}

\bibliography{reference.bib}

\end{document}